\definecolor{ForestGreen}{rgb}{0.13, 0.55, 0.13}
\theoremstyle{plain}
\newtheorem{theorem}{Theorem}[section]
\newtheorem{corollary}[theorem]{Corollary}
\theoremstyle{definition}
\theoremstyle{remark}
\title[AAMAS-2024 Formatting Instructions]{Auto-Encoding Adversarial Imitation Learning}
\author{Kaifeng Zhang\textsuperscript{1}, Rui Zhao\textsuperscript{2}, Ziming Zhang\textsuperscript{3} and Yang Gao$^{\dagger}$\textsuperscript{4,5,1}}
\affiliation{
  \institution{\textsuperscript{1}Shanghai Qi Zhi Institute, \textsuperscript{2}Tencent AI Lab,  \textsuperscript{3}Worcester Polytechnic Institute \\ \textsuperscript{4}Tsinghua University, \textsuperscript{5}Shanghai Artificial Intelligence Laboratory}
  \country{$^{\dagger}$Corresponding Author}}
\email{zhangkf@sqz.ac.cn, rui.zhao.ml@gmail.com, zzhang15@wpi.edu, gaoyangiiis@mail.tsinghua.edu.cn}
\begin{abstract}
Reinforcement learning (RL) provides a powerful framework for decision-making, but its application in practice often requires a carefully designed reward function. Adversarial Imitation Learning (AIL) sheds light on automatic policy acquisition without access to the reward signal from the environment. In this work, we propose Auto-Encoding Adversarial Imitation Learning (AEAIL), a robust and scalable AIL framework. To induce expert policies from demonstrations, AEAIL utilizes the reconstruction error of an auto-encoder as a reward signal, which provides more information for optimizing policies than the prior discriminator-based ones. Subsequently, we use the derived objective functions to train the auto-encoder and the agent policy. Experiments show that our AEAIL performs superior compared to state-of-the-art methods on both state and image based environments. More importantly, AEAIL shows much better robustness when the expert demonstrations are noisy. 
\end{abstract}
\keywords{Reinforcement Learning, Adversarial Imitation Learning, Auto-Encoding}
\newcommand{\BibTeX}{\rm B\kern-.05em{\sc i\kern-.025em b}\kern-.08em\TeX}
\begin{document}


\pagestyle{fancy}
\fancyhead{}


\maketitle 


\section{Introduction}
Reinforcement learning (RL) provides a powerful framework for automated decision-making.
However, RL still requires significantly engineered reward functions for good practical performance. 
Imitation learning offers the instruments to learn policies directly from the demonstrations, without an explicit reward function. 
It enables the agents to learn to solve tasks from expert demonstrations, such as helicopter control \citep{NIPS2006, NIPS2007, ISER_helicopter2004, ICML_helicopter2008, helicopter_2008AARCH, Abbeel_2010}, robot navigation \citep{Ratliff2006, Abbeel2008, Ziebart2008, Ziebart2010}, and building controls \citep{Barrett2015}. 

The goal of imitation learning is to induce the expert policies from expert demonstrations without access to the reward signal from the environment. 
We divide these methods into three broad categories: Behavioral Cloning (BC), Inverse Reinforcement Learning (IRL), and Adversarial Imitation Learning (AIL). AIL induces expert policies by minimizing the distribution distance between expert samples and agent policy rollouts. 
Prior AIL methods model the reward function as a discriminator to learn the mapping from the state-action pair to a scalar value, i.e., reward \citep{GAIL, fGAIL, AIRL}. 
However, the discriminator in the AIL framework would easily find the differences between expert samples and agent-generated ones, even though some differences are minor. Therefore, the discriminator-based reward function would yield a sparse reward signal to the agent. 
Consequently, how to make AIL robust and efficient to use is still subject to research.

Our AEAIL is an instance of AIL by formulating the reward function as an auto-encoder. Since auto-encoder reconstruct the full state-action pairs, unlike traditional discriminator based AIL, our method will not overfit to the minor differences between expert samples and generated samples. In many cases, our reward signal provides richer feedback to the policy training process. Thus, our new method achieves better performance on a wide range of tasks. 

Our contributions are three-fold: 
\begin{itemize}
\item We propose the Auto-Encoding Adversarial Imitation Learning (AEAIL) architecture. It models the reward function as the reconstruction error of an auto-encoder. It focuses on the full-scale differences between the expert and generated samples, and less susceptible to the discriminator attending to minor differences. 

\item Experiments show that our proposed AEAIL achieves the best overall performance on many environments compared to other state-of-the-art baselines. 

\item Empirically, we justify that our AEAIL works under a wide range of distribution divergences and auto-encoders. And the major contributing factor of our method is the encoding-decoding process rather than the specific divergence or auto-encoder. 
\end{itemize}

\section{Related work}
\subsection{Imitation learning}

\textbf{Inverse Reinforcement Learning (IRL)} IRL learns a reward function from the expert demonstrations, which assumes the expert performs optimal behaviors. One category of IRL uses a hand-crafted similarity function to estimate the rewards \citep{boularias2011relative, klein2013cascaded, piot2016bridging}. The idea of these methods is inducing the expert policy by minimizing the distance between the state action distributions of the expert and sampled trajectories. Primal Wasserstein Imitation Learning (PWIL) \citep{PWIL} is one such algorithm, which utilizes the upper bound of Wasserstein distance's primal form as the optimization objective. The advantage of these methods is that they are relatively more robust and less demonstration dependent than AIL methods. However, the performance of these methods heavily depends on the similarity measurement, and therefore, it varies greatly on different tasks. Compared to PWIL, our method achieves superior performance via automatically acquiring a similarity measurement. 
Random Expert Distillation (RED) \citep{RED} uses random network distillation to compute the rewards for imitation learning. The error between the predictor and the random target network could guide the policy to mimic the expert behaviors. Meanwhile, auto-encoder could be used in place of the random network distillation. In comparison, RED uses a fixed auto-encoder while our AEAIL utilizes the AE in an adversarial manner. ValueDice \citep{ValueDice} utilizes a critic based reward function which is optimized during training the policy and critic. However, the performance of ValueDice is sensitive to the expert data used. IQ-Learn \citep{IQ-Learn} is a dynamics-aware IL method which avoids adversarial training by learning a single Q-function, implicitly representing both the reward and the policy.

\textbf{Adversarial Imitation Learning (AIL)} AIL methods such as GAIL, DAC, $f$-GAIL, EAIRL, FAIRL \citep{GAIL, DAC, fGAIL, EAIRL, FAIRL} formulates the learned reward function as a discriminator that learns to differentiate expert transitions from non-expert ones. Among these methods, GAIL \citep{GAIL} considers the Jensen-Shannon divergence. DAC \citep{DAC} extends GAIL to the off-policy setting and significantly improves the sample efficiency of adversarial imitation learning. Furthermore, $f$-divergence is utilized in $f$-GAIL \citep{fGAIL}, which is considered improving sample-efficiency. Recently, FAIRL utilizes the forward KL divergence \citep{FAIRL} and achieves great performance, but it is still not robust and efficient enough. These methods rely heavily on a carefully tuned discriminator-based reward function. It might focus on the minor differences between the expert and the generated samples and thus gives a sparse reward signal to the agent. In comparison, our auto-encoder-based reward function learns the full-scale differences between the expert and generated samples.

\subsection{Imitation Learning with Imperfection} 

The robustness of adversarial imitation learning is questionable with imperfection in observations \citep{Third-Person, berseth2020visual}, actions, transition models \citep{State-only, christiano2016transfer}, expert demonstrations \citep{brown2019extrapolating, shiarlis2016inverse, jing2020reinforcement} and their combination \citep{kim2020domain}. Prior robust imitation learning methods require the demonstrations to be annotated with confidence scores \citep{ImitationImperfection, brown2019extrapolating, grollman2012robot}. However, these annotations are rather expensive. Compared to them, our auto-encoder-based reward function won't easily overfit to the noisy feature. Therefore, our AEAIL is relatively straightforward and robust to the noisy expert data and does not require any annotated data. 

\subsection{Auto-Encoding based GANs}

Auto-encoders have been successfully applied to improve the training stability and modes capturing in GANs. We classify auto-encoding based GANs into three categories: (1) utilizing an auto-encoder as the discriminator such as energy-based GANs \citep{EBGAN} and boundary-equilibrium GANs \citep{BEGAN}; (2) using a denoising auto-encoder to derive an auxiliary loss for the generator \citep{warde2016improving}; (3) combining variational auto-encoder and GANs to generate both vivid and diverse samples by balancing the objective of reconstructing the training data and confusing the discriminator \citep{VAE-GAN}. Our method AEAIL takes inspirations from EBGAN \citep{EBGAN}, utilizing the auto-encoder in the reward function.

\section{Background}

\textbf{Inverse Reinforcement Learning (IRL)} IRL extracts the policy towards mimicking expert behaviors with the help of a learned reward function. On the other hand, AIL directly induces the expert policies by minimizing the distribution distance between expert demonstrations and agent-generated samples. 

\textbf{Adversarial Imitation Learning (AIL)} Assume that we are given a set of expert demonstrations, AIL involves a discriminator which tries to distinguish the expert and policy generated samples and a policy that is optimized to confuse the discriminator accordingly. Ideally, the rollouts from the agent will be indistinguishable from the expert demonstrations in the end. The discriminator cannot classify the expert samples and policy-generated ones any more at the end of the training process. Consequently, this kind of discriminator-based reward function can be seen as a pseudo-reward signal, which is only used for optimizing policies during training. 

\textbf{Wasserstein Divergence} One popular distribution divergence for measuring the distance between policy samples and expert samples is the Wasserstein divergence \citep{WGAN}:

\begin{align}\label{w-distance}
d(p, q) = \sup_{|\phi|_L \leq 1} {E}_{x \sim p}[\phi(x)] - {E}_{y \sim q}[\phi(y)], 
\end{align}

where $p$ and $q$ are two probability distributions, $\phi$ is some function that is 1-Lipschitz continuous. Equation \ref{w-distance} measures the distance between these two distributions. Intuitively, we can view it as the minimum cost of turning the piles from distribution $p$ into the holes in distribution $q$.

\begin{figure*}
\begin{center} 
\includegraphics[width=4.5in]{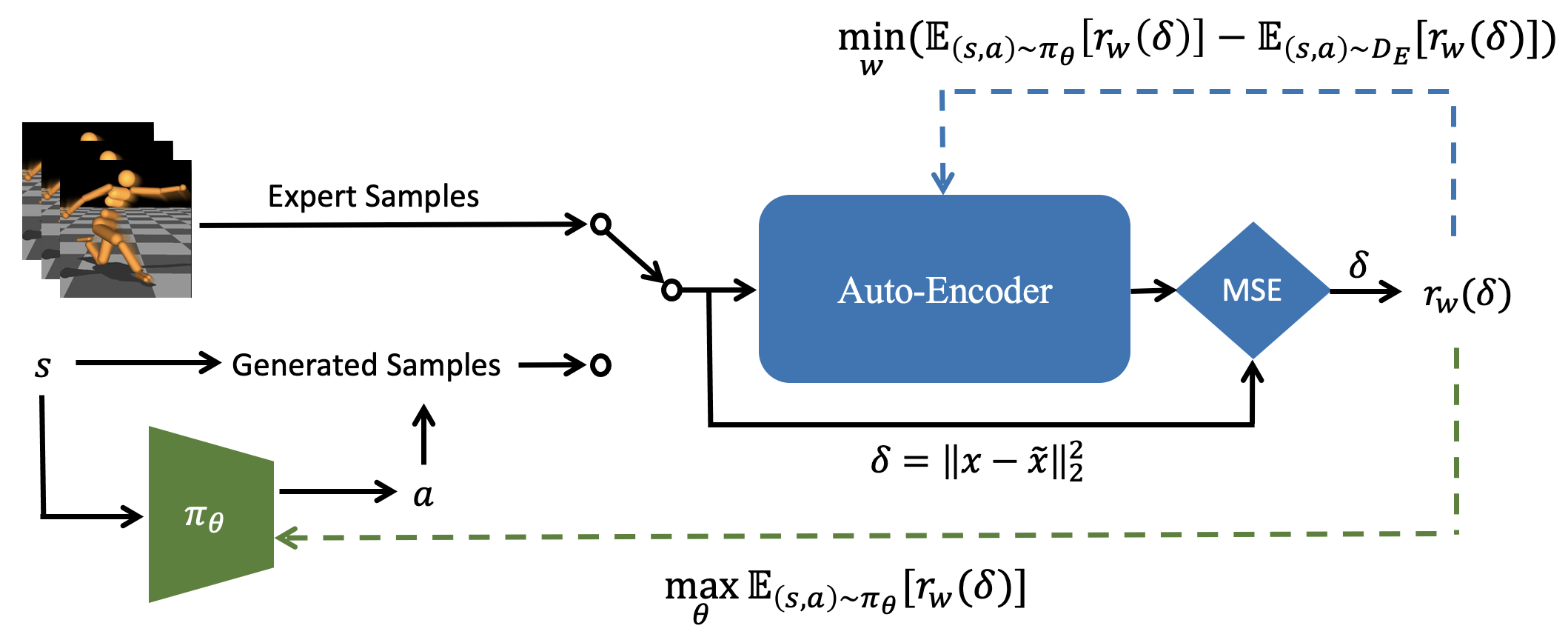} 
\caption{Training framework of our AEAIL. The auto-encoder computes the reconstruction error for these two mini-batches of data examples and optimizes the derived objectives. Therefore, a surrogate reward function $r_w(\delta)$ provides the signal to the agent. The agent can induce the expert policies by iteratively training the auto-encoder and the agent policy.}
\label{AEIRL} 
\end{center} 
\skip -0.1in
\end{figure*}

\section{Methodology}
\subsection{Overview}\label{motivation}

The reward function in AIL methods is a discriminator, which attempts to assign high values to the regions near the expert demonstrations and assign low values to the other areas. However, the discriminator would easily overfit to the minor differences between expert data and policy-generated samples. Consider the CartPole balancing task, the goal of this task is to keep the pole balancing via moving the cart left or right. The state of the environment is a feature consisting of position, angle, angle's rate, and cart velocity while the action is moving left or right. Here, we assume that all the expert states' velocity is $2$, for example. When the generated states' velocity is $1.9$, and other dimensions of state-action pair are the same as the expert's, the discriminator of GAIL would still give a low reward on these generated state-action samples. However, the policy rollouts may perform very well towards mimicking the expert's behaviors. In other words, the reward function in GAIL on this example overfits to the minor differences between the expert and the sampled data and leads the policy to miss the underlying goal of the expert. 

In this paper, we propose an auto-encoder-based reward function for adversarial imitation learning. It utilizes the reconstruction error of the state action pairs to compute the reward function. The reconstruction error based reward signal force the discriminator to consider all information in the state-action tuple, rather than focusing on the minor differences. 
Therefore, this reward signal wouldn't lead to overconfidence in distinguishing the expert and the generated samples. Recall the CartPole example, the mean square error between states' velocity $1.9$ and $2$ is very small. And it could still feed a good reward to the agent under this situation. Therefore, intuitively our proposed reward signal can better focus on the full-scale differences between the expert and generated state action pairs, yielding more informative rewards and thus improving the policy learning.

\subsection{Our method}\label{Method}

Our approach is to minimize the distance between the state action distribution of the policy $\pi_\theta$ and that of the expert demonstrations $D_E$. 

The objective formulation we used in our method is Wasserstein divergence:
\begin{align}\label{objective}
d(\pi_E, \pi_\theta) = \sup_{|r_w|_L \leq K} {E}_{\pi_E}[r_w(s,a)] - {E}_{\pi_\theta}[r_w(s,a)], 
\end{align}

where the reward function network's parameters are denoted as $w$ and the policy network's parameters are represented as $\theta$. 
Minimizing this distance will induce the expert policy from expert demonstrations. Therefore, the optimization of the policy $\pi_\theta$ and the reward function $r_w(s,a)$ forms a bi-level optimization problem, which can be formally defined as:
\begin{align}\label{bi-level}
\min_{\pi_\theta} \max_{r_w}  \left( {E}_{(s,a) \sim D_E}[r_w(s,a)] - {E}_{(s,a)\sim \pi_{\theta}}[r_w(s,a)] \right).
\end{align}
Here, we clip the weights of the reward function to keep it K-Lipschitz. Optimizing Equation \ref{bi-level} leads to an adversarial formulation for imitation learning. The outer level minimization with respect to the policy leads to a learned policy that is close to the expert. The inner level maximization recovers a reward function that attributes higher values to regions close to the expert data and penalizes all other areas.

In our method, we use an auto-encoder based surrogate pseudo-reward function instead, which is defined as:
\begin{align}\label{RFunction}
r_w(s, a) = 1/(1+\text{AE}_w(s,a)),
\end{align}
where AE is the reconstruction error of an auto-encoder:
\begin{align}\label{AE}
\textstyle \text{AE}(x) = \Vert \text{Dec}\circ \text{Enc}(x) - x \Vert^2_2
\end{align}
Here, $x$ represents the state-action pairs. Equation \ref{AE} is the mean square error between the sampled state-action pairs and the reconstructed samples. 
This form of the reward signal uses the reconstruction error of an auto-encoder to score the state-action pairs in the trajectories. Equation \ref{RFunction} is a monotonically decreasing function over the reconstruction error of the auto-encoder. We give high rewards to the state-action pairs with low reconstruction errors of the auto-encoding process and vice versa. Section "Overview" \ref{motivation} motivates that this form of reward signal focuses more on the full-scale differences between the expert and generated samples, and it won't easily overfit to the noise of the expert data. 

Training the auto-encoder is an adversarial process considering the objective \ref{bi-level}, which is minimizing the reconstruction error for the expert samples while maximizing this error for generated samples. When combining Equation \ref{bi-level} and Equation \ref{RFunction}, we can obtain the training objective for the auto-encoder as:
\begin{align}\label{Loss}
\mathcal{L} = &{E}_{(s,a)\sim \pi_{\theta}}[r_w(s,a)] - {E}_{(s,a)\sim D_E}[r_w(s,a)]  \\
=& {E}_{(s,a)\sim \pi_{\theta}}[1/(1+\text{AE}_w(s,a))]\\
&-  {E}_{(s,a)\sim D_E}[1/(1+\text{AE}_w(s,a))].
\end{align}

With this adversarial objective, the auto-encoder learns to maximize the full-scale differences between the expert and the generated samples. As a result, it gives the agent a denser reward signal. Furthermore, the agent can also be more robust when facing noisy expert data due to the robust auto-encoding objective.

Figure \ref{AEIRL} depicts the architecture for AEAIL. The auto-encoder-based reward function takes either expert or the generated state-action pairs and estimates the reconstruction error based rewards accordingly. We iteratively optimize the auto-encoder and the agent policy under this adversarial training paradigm. 

\begin{algorithm}[htb]
   \caption{Auto-Encoding Adversarial Imitation Learning (AEAIL)}\label{Algorithm}
	\begin{algorithmic}
	    \STATE {\bfseries Input:} Initial parameters of policy, auto-encoder $\theta_0$, $w_0$; Expert trajectories $D_{E}$. 
        \REPEAT
		\STATE Sample state-action pairs $(s_i, a_i) \sim \pi_{\theta_i}$ and $(s_E, a_E) \sim D_{E}$ with same batch size. \\
		\STATE Update $w_i$ to $w_{i+1}$ by decreasing with the gradient:
		\begin{align*}
		    {E}_{(s_i,a_i)}[\nabla_{w_i} 1/(1 + \text{AE}_{w_i}(s_i, a_i))] -  {E}_{(s_E, a_E)}[\nabla_{w_i} 1/(1+\text{AE}_{w_i}(s, a))]
		\end{align*}
		\STATE Take a policy step from $\theta_i$ to $\theta_{i+1}$, using the TRPO update rule with the reward function $1/(1 + \text{AE}_{w_i}(s, a))$, and the objective function for TRPO is:
		\begin{align*}
		 {E}_{(s,a)}[- 1/(1 + \text{AE}_{w_i}(s, a))].
		\end{align*}
		\UNTIL{$\pi_{\theta}$ converges}
    \end{algorithmic}
\end{algorithm}

Algorithm \ref{Algorithm} depicts the pseudo-code for training AEAIL. The first step is to sample the state-action pairs from expert demonstrations $D_E$ and the trajectories sampled by the current policy with the same batch size. Then we update the auto-encoder by decreasing with the gradient with loss function Eq. \ref{Loss}. Finally, we update the policy assuming that the reward function is Eq. \ref{RFunction} which leads to an adversarial training paradigm. We repeat these steps until the approach converges. 

\subsection{Theoretical Properties}

\begin{theorem}\label{theorem}
Provided that $f(x)$ is K-Lipschitz, then the reward function formulation $1/(1+(x-f(x))^2)$ is 2m(K+1)-Lipschitz, where $m$ is a bounded constant. 
\end{theorem}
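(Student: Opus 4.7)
The plan is to decompose the reward into an outer scalar map composed with an inner residual map, show each factor is Lipschitz with an explicit constant, and then combine them via the chain rule. Concretely, define the residual $g(x) = x - f(x)$ and the scalar squash $h(u) = 1/(1+u^2)$, so that $r_w(x) = h(g(x))$.

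First I would bound the Lipschitz constant of $g$. By the triangle inequality and the $K$-Lipschitz hypothesis on $f$,
\begin{align*}
|g(x) - g(y)| \;\le\; |x-y| + |f(x) - f(y)| \;\le\; (K+1)\,|x-y|.
\end{align*}
So $g$ is $(K+1)$-Lipschitz. Next I would analyse $h$. A direct differentiation gives $h'(u) = -2u/(1+u^2)^2$, hence
\begin{align*}
|h'(u)| \;=\; \frac{2|u|}{(1+u^2)^2} \;\le\; 2|u|,
\end{align*}
since $(1+u^2)^2 \ge 1$. Invoking the standing assumption that $m$ is a bounded constant controlling the magnitude of the residual, i.e.\ $|g(x)| = |x - f(x)| \le m$ on the domain of interest, we obtain $|h'(g(x))| \le 2m$.

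Combining the two bounds through the chain rule yields $|r_w'(x)| = |h'(g(x))|\cdot|g'(x)| \le 2m(K+1)$ wherever $r_w$ is differentiable, and the same constant transfers to the Lipschitz estimate
\begin{align*}
|r_w(x) - r_w(y)| \;\le\; 2m(K+1)\,|x-y|
\end{align*}
via the mean value theorem (or, for the non-smooth case, by integrating along a straight line and using the composition rule for Lipschitz maps). The main conceptual hurdle is not the calculation but pinning down the role of $m$: the squash $h$ alone is globally Lipschitz with a constant strictly less than $1$, so the factor $2m$ only appears after one commits to the crude bound $|h'(u)|\le 2|u|$, which is meaningful precisely because the residual $x - f(x)$ is assumed uniformly bounded by $m$. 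Once that interpretation is fixed, the rest is a one-line chain-rule estimate.
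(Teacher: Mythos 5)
Your proof is correct and reaches the same constant, but by a genuinely different route. The paper factors the map as $x \mapsto g(x) \mapsto g^2(x) \mapsto 1/(1+g^2(x))$ and argues purely algebraically: the difference of squares $|g^2(x_1)-g^2(x_2)| = |g(x_1)-g(x_2)|\,|g(x_1)+g(x_2)| \le 2m(K+1)|x_1-x_2|$ supplies the factor $2m$, and a separate elementary estimate shows that $u \mapsto 1/(1+u)$ is $1$-Lipschitz on $u \ge 0$ because the denominator $(1+h(x_1))(1+h(x_2)) \ge 1$, so the final reciprocal step costs nothing. You instead fold the squaring and the reciprocal into the single scalar squash $h(u)=1/(1+u^2)$ and control it by differentiation plus the chain rule. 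Both arguments are sound; yours requires a small extra remark for the non-smooth case (which you supply), while the paper's is purely metric and needs no differentiability. What your route buys, and the paper's obscures, is the observation that $\sup_u |h'(u)| = 3\sqrt{3}/8 < 1$, so the reward is in fact globally $(3\sqrt{3}/8)(K+1)$-Lipschitz with no dependence on $m$ at all --- the stated constant $2m(K+1)$ is an artifact of the crude bound $|h'(u)| \le 2|u|$ (equivalently, of the paper's decision to bound the squaring step separately). Your closing paragraph makes exactly this point, which is a sharper understanding of the theorem than the paper's own proof conveys. The one caveat shared by both proofs is that they are written for scalar $x$, whereas the reward actually takes $\Vert x - \mathrm{Dec}\circ\mathrm{Enc}(x)\Vert_2^2$ of a vector; the same argument goes through with $|{\cdot}|$ replaced by $\Vert\cdot\Vert_2$, but neither proof says so.
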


\begin{proof}
See in Appendix \ref{Appendix:proof-theorem}.
\end{proof}

Theorem \ref{theorem} shows that clipping the weights of the auto-encoder would make the reward function K-Lipschitz. Therefore, we can obtain Corollary \ref{corollary1} which shows that our AEAIL is actually minimizing a Wasserstein distance. 

\begin{corollary}\label{corollary1} (Our AEAIL is minimizing Wasserstein distance) the divergence for AEAIL:
\begin{align}\label{objective}
d(\pi_E, \pi_\theta) = \sup_{|r_w|_L \leq K} {E}_{\pi_E}[r_w(s,a)] - {E}_{\pi_\theta}[r_w(s,a)], 
\end{align}
is a Wasserstein distance. 
\end{corollary}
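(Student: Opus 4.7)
The plan is to reduce Corollary \ref{corollary1} directly to the Kantorovich-Rubinstein duality given in Equation \ref{w-distance}; once Theorem \ref{theorem} is in hand the argument is essentially just rescaling.

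First I would invoke Theorem \ref{theorem}: after clipping the encoder/decoder weights so that each piece is Lipschitz with some constant $K'$, the reward $r_w(s,a) = 1/(1+\text{AE}_w(s,a))$ lies inside the $K$-Lipschitz ball with $K = 2m(K'+1)$. The feasible set in the AEAIL sup is therefore a subset of the $K$-Lipschitz functions on the state-action space. Using the elementary observation that every $K$-Lipschitz function $\phi$ can be written as $K\psi$ with $\psi$ $1$-Lipschitz (and conversely), I would rescale the sup:
\begin{align*}
\sup_{|\phi|_L\le K}\bigl({E}_{\pi_E}[\phi]-{E}_{\pi_\theta}[\phi]\bigr)
 &= K\sup_{|\psi|_L\le 1}\bigl({E}_{\pi_E}[\psi]-{E}_{\pi_\theta}[\psi]\bigr) \\
 &= K\cdot d_W(\pi_E,\pi_\theta),
\end{align*}
where $d_W$ is the $1$-Wasserstein distance of Equation \ref{w-distance}. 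Since $K>0$ does not depend on $\pi_\theta$, the AEAIL divergence is a positive multiple of the Wasserstein distance; hence it is itself a metric on distributions inducing the same topology, and the minimizers in $\pi_\theta$ coincide. This is what it means for AEAIL to ``minimize a Wasserstein distance.''

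The main obstacle I anticipate is the standard caveat that the supremum in Corollary \ref{corollary1} actually ranges over the parameterized family $\{r_w\}$ of functions of the specific form $1/(1+\text{AE}_w)$, not over \emph{all} $K$-Lipschitz test functions, so a priori one only has $\le K\cdot d_W$. I would address this exactly as WGAN does, by noting that under a mild expressiveness assumption on the auto-encoder architecture (rich enough to approximate any $K$-Lipschitz test function to arbitrary precision on the compact relevant domain), the two suprema coincide and the inequality above becomes an equality. A secondary but purely technical point is that $r_w$ is automatically bounded in $(0,1]$, which one should check does not shrink the feasible set further in a way that affects the attained value; this follows because the $K$-Lipschitz functions achieving the Wasserstein sup can always be translated by a constant (which cancels in the difference of expectations) and rescaled within the bounded range. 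Beyond invoking Theorem \ref{theorem} and this density/normalization argument, no further technical work is required.
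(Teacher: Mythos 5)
Your proposal takes essentially the same route as the paper's own proof in Appendix~\ref{Appendix:proof-corollary}: both reduce the claim to the WGAN integral-probability-metric characterization of the Wasserstein distance over the $K$-Lipschitz ball and invoke Theorem~\ref{theorem} (via weight clipping) to place $r_w$ inside that ball. If anything you are more careful than the paper, which dismisses the gap between the parameterized family $\{1/(1+\mathrm{AE}_w)\}$ and the full $K$-Lipschitz class with the single sentence ``this will yield the same behaviour,'' whereas you explicitly flag the expressiveness assumption and the bounded range of $r_w$ that are needed to turn the a priori inequality into an equality.
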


\begin{proof}
See in Appendix \ref{Appendix:proof-corollary}.
\end{proof}

In AEAIL, we utilize an auto-encoder in place of the discriminator for more empirical advantages. Corollary \ref{corollary2} shows that our AEAIL is equivalent to using a generalized discriminator. As a consequence, AEAIL shares the same convergence properties with GAIL under some assumptions. 

\begin{corollary}\label{corollary2} Our AEAIL shares a similar theoretical framework with GAIL, where the auto-encoder based reward function can be seen as a generalized discriminator. 
\citep{theory} investigates the theoretical properties of GAIL. We make the same set of assumptions for our auto-encoder based reward function as these for the discriminator in GAIL \citep{theory}:

(i) the auto-encoder based reward function has bounded gradient, i.e., $\exists C_r \in {R}$, such that 
\[\Vert \nabla_w r_w \Vert_{\infty,2} := \sqrt{\sum_i \Vert \frac{\partial r_w}{\partial w_i} \Vert_{\infty}^2} \leq C_r\]
(ii) the auto-encoder based reward function has L-Lipschitz continuous gradient, i.e.,  $\exists L_r \in {R}$, such that $\forall s \in \mathcal{S}, \forall a \in \mathcal{A}, \forall w_1, w_2 \in \mathcal{W}$,  then: 
\[\Vert \nabla_w r_{w_1}(s,a) - \nabla_w r_{w_2}(s,a) \Vert_2 \leq L_r \Vert w_1 - w_2 \Vert_2\]

(iii) the distribution divergence
\[
d_{\mathcal{F}}(P_r, P_{\theta}) = \sup_{r \in \mathcal{R}}{E}_{x\sim P_r}[r(x)] - {E}_{x \sim P_\theta}[r(x)] + \phi(r(x))
\]
is properly regularized to be strongly concave with respect to the reward function. Here, $\phi(r(x))$ is the regularizer.

These conditions are assumptions for theoretical convenience since they don't always hold for a multi-layer neural network-based discriminator in GAIL or our proposed auto-encoder. When these assumptions hold, our AEAIL shares the same theoretical properties as GAIL: it converges globally. 
\end{corollary}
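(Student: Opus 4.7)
The plan is to show that the cited convergence theorem for GAIL from \citep{theory} is an abstract result: its proof never uses the fact that the discriminator is a single-output classifier trained with a cross-entropy loss. What it actually uses is only (a) the existence of a parametrized real-valued function $r_w(s,a)$ that plays the role of the inner-loop critic, (b) quantitative regularity of $r_w$ with respect to its parameters, and (c) a saddle-point formulation whose inner problem is strongly concave after regularization. I would therefore organize the proof as a direct reduction: instantiate the abstract GAIL framework with $r_w(s,a) := 1/(1+\text{AE}_w(s,a))$ and verify that each ingredient the reduction needs is furnished by the stated assumptions.

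First I would make the identification explicit. In GAIL the divergence is written as a supremum over a parametric family of discriminators; in AEAIL, by Equation \ref{bi-level} combined with Equation \ref{RFunction}, the divergence is the same supremum but taken over the parametric family $\{r_w : w \in \mathcal{W}\}$ induced by the auto-encoder. Thus the only change in the min-max problem is the hypothesis class; the outer policy update and the environment interaction are unchanged. This is the sense in which the auto-encoder acts as a \emph{generalized discriminator}: its scalar output, rather than a sigmoidal classification logit, is what enters the saddle-point objective.

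Next I would check the three hypotheses (i)--(iii) are exactly the hypotheses required by the GAIL convergence analysis, applied to $r_w$ instead of to the classical discriminator. Assumption (i) gives the bounded-gradient condition used to control the magnitude of the stochastic gradient estimates; assumption (ii) gives $L$-smoothness of $r_w$ in $w$, which is what powers the descent-lemma-style inequalities in the inner maximization; and assumption (iii) supplies the strong concavity of the regularized divergence in the function argument, which is the crucial ingredient turning the inner problem into a well-posed concave maximization. With these three in hand, the argument in \citep{theory} applies verbatim: the alternating updates of $w$ and $\theta$ constitute a two-timescale stochastic approximation whose slow component converges to a stationary point of the outer problem, and under the strong concavity of the inner problem this stationary point coincides with the global saddle point.

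The step I expect to be the genuine content, rather than bookkeeping, is the identification at the beginning: one must argue that the GAIL proof is \emph{agnostic} to the internal architecture of $r_w$, i.e.\ that nothing in the proof of \citep{theory} exploits the sigmoidal/classifier structure of the discriminator beyond the abstract assumptions (i)--(iii). Once this is accepted, the rest is a pure restatement. I would therefore spend the bulk of the write-up explaining why the abstract GAIL framework subsumes AEAIL, and only briefly remark that (i)--(iii) are the same regularity conditions that appear in \citep{theory}, so no new analytical work is required and global convergence transfers to AEAIL without modification.
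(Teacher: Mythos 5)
Your proposal is correct and matches the paper's (implicit) argument: the paper offers no separate proof for this corollary, instead treating it as an immediate reduction to the cited GAIL analysis once assumptions (i)--(iii) are imposed on the auto-encoder-based reward function, which is exactly the architecture-agnostic instantiation you describe. Your write-up simply makes explicit the bookkeeping the paper leaves implicit, so no gap and no genuinely different route.
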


\begin{figure*}[htb]
	\centering
	\begin{minipage}{0.99\linewidth}\centering
		\includegraphics[width=\linewidth]{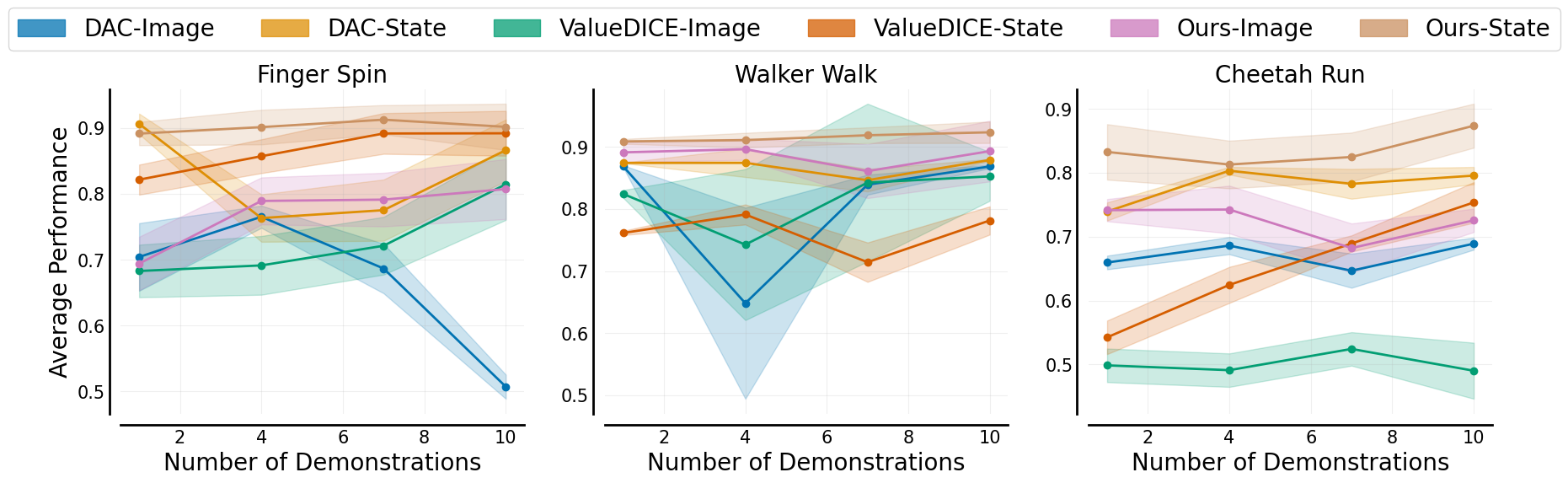}\\
	\end{minipage} 
	\caption{Mean and standard deviation return of the final policy performance over 10 rollouts and 3 seeds.}
	\label{lineplots}
\end{figure*}

\section{Experiments}

We conduct experiments on DMControl and MuJoCo-v2 environments, dividing them into two distinct phases. The first phase aim to assess whether our AEAIL could outperform other state-of-the-art imitation learning algorithms in both state and image-based environments. The second phase focuses on evaluating the robustness of AEAIL in the presence of noisy expert data, different divergence metrics, various auto-encoders, and different AE architectures. By addressing these two key aspects, we aim to comprehensively analyze the performance and versatility of our AEAIL algorithm.

\subsection{Phase 1: Comparison on state and image settings}

We provide a comprehensive comparison of DAC, ValueDICE, and our AEAIL in both state and image environment settings. To demonstrate the superior imitation capabilities of our AEAIL, we conduct experiments using the DrQ-v2 RL agent for a fair comparison. The number of expert demonstrations ranges from 1 to 10 trajectories for the experiments.

In terms of implementation, for the image-based setting, we employ an encoder for representation learning across the actor, critic, and discriminator. However, we only utilize the critic loss to train the encoder for improved representation learning. This approach aligns with the findings in ROT \cite{ROT}, which suggest that stopping the actor and discriminator gradients for the encoder leads to better imitation performance, particularly in image settings.

Figure \ref{lineplots} illustrates the scaled imitation performance of the baselines and our AEAIL in both state and image settings. Overall, the performance in the image setting is slightly lower than that in the state setting, particularly for ValueDICE. Our AEAIL demonstrates increasing imitation performance with a higher number of expert demonstrations, as more data proves beneficial for the adversarial training paradigm.

In the state setting, our AEAIL achieves relative improvements of $8.9\%$ and $4.2\%$ compared to DAC and ValueDICE, respectively, highlighting its superiority in imitation. Furthermore, in the image setting, our AEAIL achieves relative improvements of $15.7\%$ and $5.9\%$ compared to DAC and ValueDICE, respectively, indicating its greater promise in high-dimensional environments. Notably, our AEAIL exhibits enhanced robustness by focusing on the comprehensive differences between expert and generated rollouts, rather than easily overfitting to minor discrepancies between the expert and non-expert's. 
In addition, our AEAIL achieves overall expert performances of $90.1\%$ and $77\%$ in the state and image settings, respectively. It demonstrates remarkable success in solving the task in the state setting and achieves competitive results even in the image setting. 

\subsection{Phase 2: Empirical analysis}

\begin{figure*}[htb]
	\centering
	\begin{minipage}{0.3\linewidth}\centering
		\includegraphics[width=\linewidth]{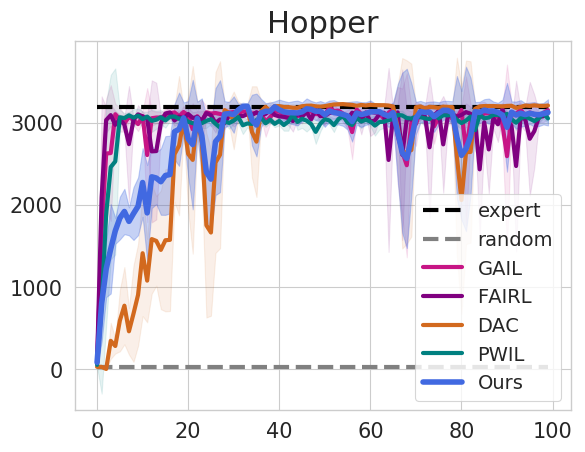}\\
	\end{minipage} 
	\begin{minipage}{0.3\linewidth}\centering
		\includegraphics[width=\linewidth]{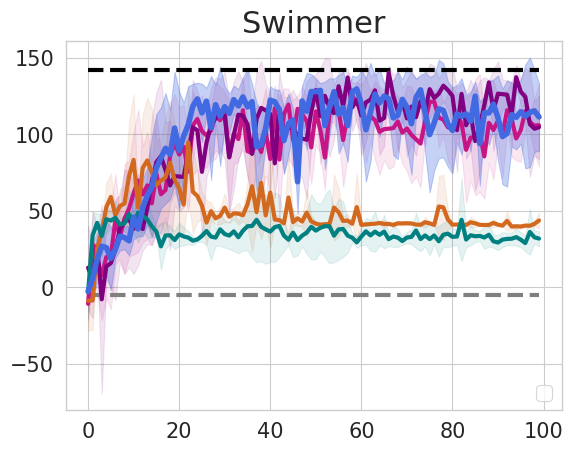}\\
	\end{minipage} 
	\begin{minipage}{0.3\linewidth}\centering
		\includegraphics[width=\linewidth]{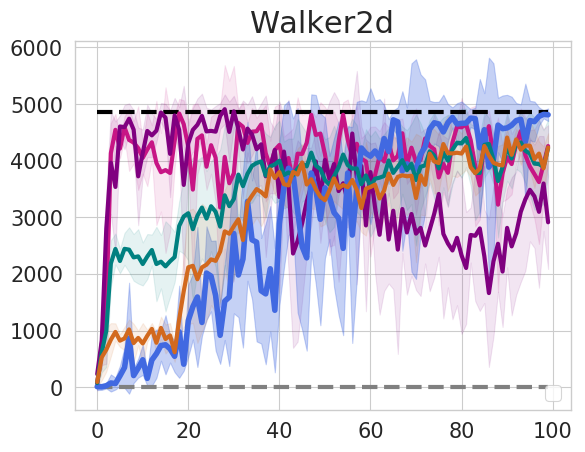}\\
	\end{minipage}
 
	\begin{minipage}{0.3\linewidth}\centering
		\includegraphics[width=\linewidth]{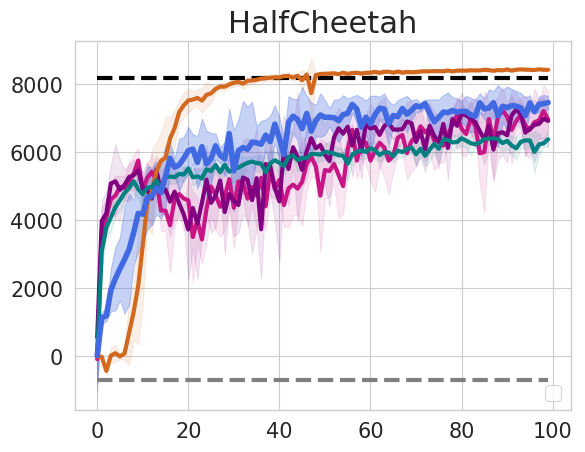}\\
	\end{minipage}
	\begin{minipage}{0.3\linewidth}\centering
		\includegraphics[width=\linewidth]{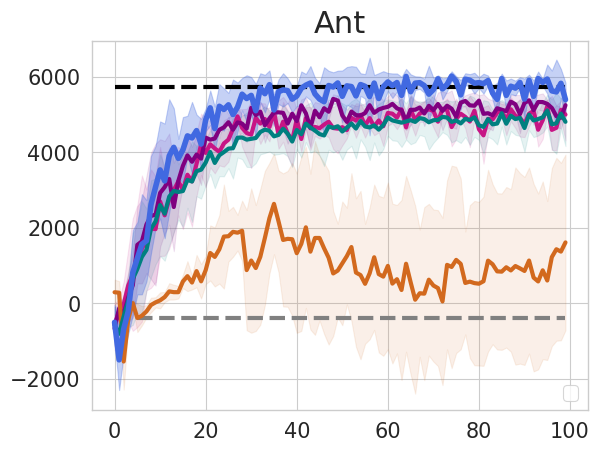}\\
	\end{minipage} 
	\begin{minipage}{0.3\linewidth}\centering
		\includegraphics[width=\linewidth]{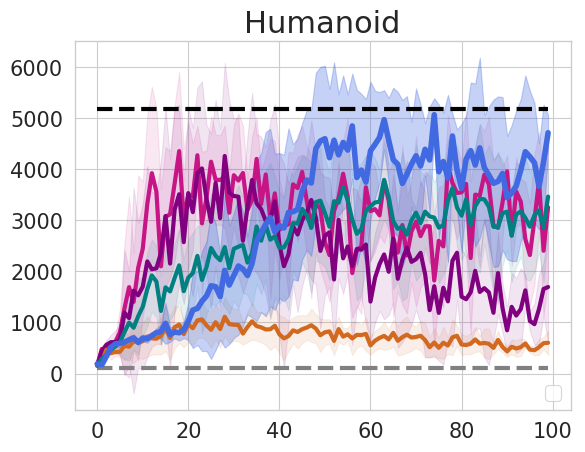}\\
	\end{minipage} 
	\caption{Mean and standard deviation return of the deterministic evaluation policy over 10 rollouts and 5 seeds, reported every 100k timesteps, which are learning from clean expert demonstrations.}
	\label{NonNoisyCurve}
\end{figure*}

\begin{table*}
\caption{Learned policy performance for different imitation learning ablations on clean expert data.}
\label{final_comparison_non_noisy}
\begin{center}
\begin{sc}
\begin{tabular}{lcccccr}
\toprule
Task &  Walker2d & Hopper & Swimmer & HalfCheetah & Ant & Humanoid   \\
\midrule

Expert &  $4865.1$  & $3194.9$ &$142.2$ & $8193.6$ & $5730.9$   & $5187.9$  \\

JSD &  $4259 \pm 369$   & $3152 \pm 102$ & $ 106 \pm 27$ & $6964 \pm 807$ & $4998 \pm 686$ & $ 3257\pm 1393$ \\

F-KLD  &  $2912 \pm 825$   & $ 3117\pm 75$ & $ 105\pm 21$ & $ 6917\pm 267$ & $ 5247 \pm 259$ & $1695\pm 1098$ 	\\

GOT  & $4226 \pm 501$   & $3051 \pm 51$  & $32 \pm 5$ & $6376 \pm229$ & $4808 \pm 645$ & $3466\pm 736$ 	\\

ASW &  $4533 \pm 248$  & ${\pmb{3208 \pm 16}}$  & $ 44\pm 5$  & ${\pmb{8420 \pm 29}}$  & $1610 \pm 2328$  & $ 602\pm 232$ 	\\

\hline
\bf{Ours}&  ${\pmb{4810 \pm 111}}$    & $3125 \pm 159$  & ${\pmb{111 \pm 22}}$  & $7458 \pm 191$  & ${\pmb{5394 \pm 451}}$ & ${\pmb{4717 \pm 346}}$ 	\\

\bottomrule
\end{tabular}
\end{sc}
\end{center}
\end{table*}

We conduct ablation studies to empirically examine the properties of our AEAIL algorithm. 
In the first ablation, we aime to demonstrate the effectiveness of the encoding-decoding process by using Jensen-Shannon divergence as the distance metric along with a discriminator-based reward function (JSD). This setting mirrored the popular imitation learning method, i.e., GAIL. 

Next, we explore the impact of the distance metric on performance, drawing inspiration from another imitation learning method, i.e., FAIRL. We employe forward KL divergence (F-KLD) as the distance metric for this ablation, considering that it could potentially contribute to improved performance. 

Recognizing that the distance metric could be replaced with a stationary metric, we further investigated the use of PWIL's greedy optimal transport (GOT) as a distance metric for our ablation study.

We also considered the role of the absorbing state wrapper (ASW), as proposed in DAC \cite{DAC}, which could potentially enhance the imitation performance of our AEAIL. Thus, we included ASW (DAC) as another ablation in our study. 

Moreover, it is worth noting that our AEAIL algorithm features two variants: one based on the W-distance metric and another based on JSD. Both variants solely rely on the encoding-decoding process, without utilizing a discriminator-based reward function.

By conducting these ablation studies, we aime to gain insights into the empirical properties of our AEAIL algorithm and understand the impact of various components and metrics on its performance.

\textbf{Question 1.} \emph{Does our AEAIL achieve best performance compared to these four ablation methods?}

Table \ref{final_comparison_non_noisy} (training curves see in Figure \ref{NonNoisyCurve}) shows the final policy performance of different ablation methods on clean expert data. Results show that GOT performs well on Walker2d, Hopper, HalfCheetah, and Humanoid but still not good enough. JSD method and F-KLD are comparable on Hopper, Swimmer, HalfCheetah, while JSD is better than F-KLD on other tasks. ASW achieves great performance on Walker2d, Hopper and HalfCheetah while it performs poorly on other tasks.

Our method's overall scaled reward is about $0.921$, whereas the best ablation method is $0.83$ for JSD. There is an about $11\%$ relative improvement. Our method outperforms other ablations on all locomotion tasks except for Hopper and HalfCheetah. Here we would like to point out that our AEAIL has already achieved $97.8\%$ of the expert performance on Hopper while $91.7\%$ on HalfCheetah, which is very close to completely solving the tasks.

\textbf{Question 2.} \emph{Is there any pattern between the performance gains and the dimension of tasks?} 
\begin{figure*}[htb]
	\centering
	\begin{minipage}{0.3\linewidth}\centering
		\includegraphics[width=\linewidth]{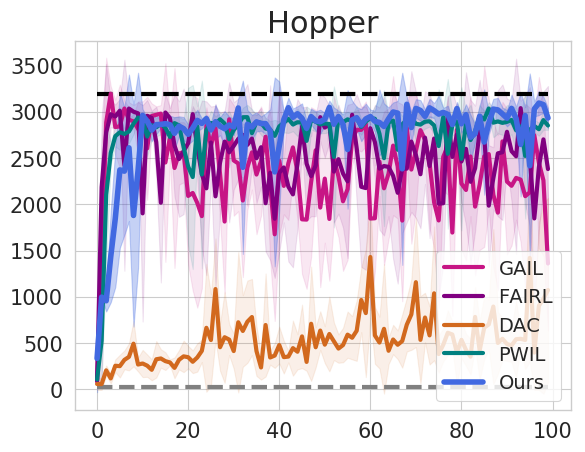}\\
	\end{minipage} 
	\begin{minipage}{0.3\linewidth}\centering
		\includegraphics[width=\linewidth]{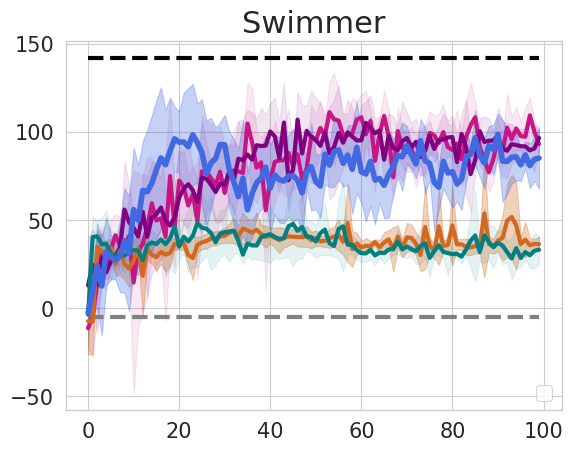}\\
	\end{minipage} 
	\begin{minipage}{0.3\linewidth}\centering
		\includegraphics[width=\linewidth]{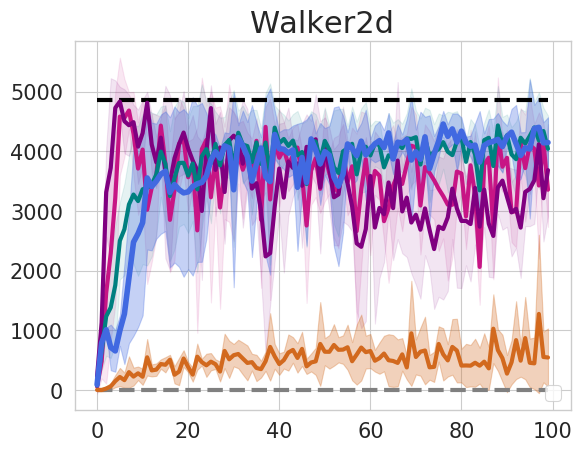}\\
	\end{minipage} 	
 
	\begin{minipage}{0.3\linewidth}\centering
		\includegraphics[width=\linewidth]{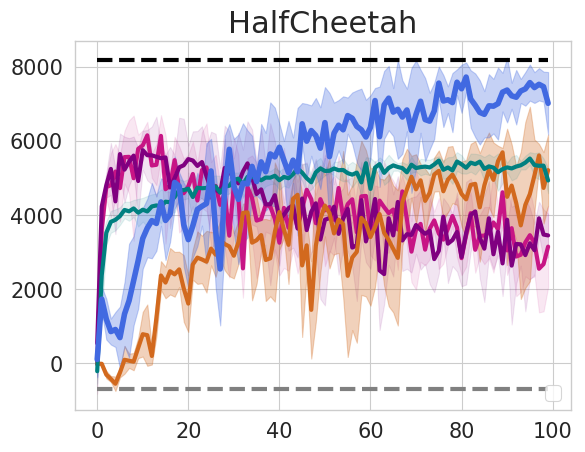}\\
	\end{minipage} 
	\begin{minipage}{0.3\linewidth}\centering
		\includegraphics[width=\linewidth]{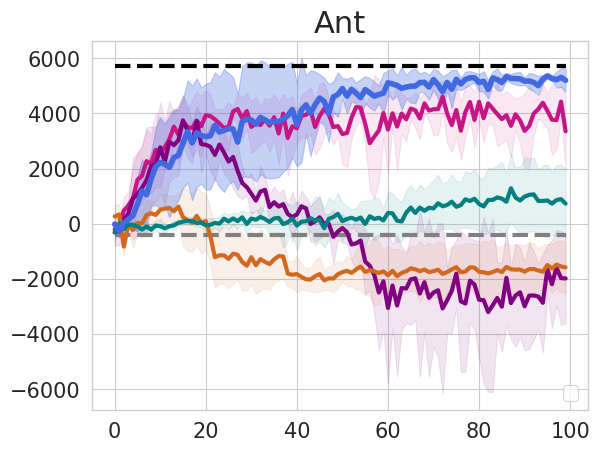}\\
	\end{minipage} 
	\begin{minipage}{0.3\linewidth}\centering
		\includegraphics[width=\linewidth]{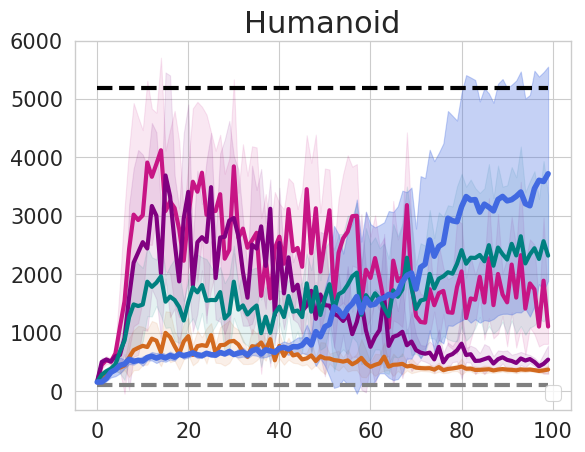}\\
	\end{minipage} 
	\caption{Mean and standard deviation return of the deterministic evaluation policy over 10 rollouts and 5 seeds learning from noisy expert demonstrations, reported every 100k timesteps.}
	\label{NoisyCurve}
\end{figure*}

\begin{figure}
\centering
\includegraphics[width=0.36\textwidth]{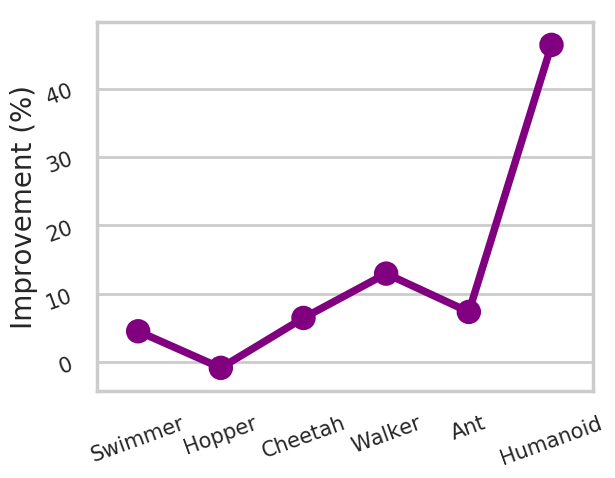}\\
	\caption{Relative improvement of our AEAIL compared to JSD with discriminator-based rewards in different environments. Environments are sorted by the number of state-action dimensions. }
	\label{Improve}
\end{figure}

Figure \ref{Improve} shows the relative improvement of overall scaled rewards for our AEAIL compared to JSD with discriminator-based rewards on clean expert data. The abscissa axis represents different tasks with the growth of task dimension, from Swimmer (dim=10) to Humanoid (dim=393). With the increasing task dimension, the relative improvement of our AEAIL shows an increasing trend. 
It indicates that our method has much more benefits when applied into high dimensional environments. We hypothesize that the discriminator in AIL would focus much more on the minor differences between the expert and agent samples with the growth of task dimension. It is harder for AIL to scale in higher dimensional tasks. On the other hand, our encoding-decoding process provides more information to the policy learning. Consequently, AEAIL scales relatively better than JSD in higher dimensional tasks.

\textbf{Question 3.} \emph{Is our AEAIL robust to the noisy expert data?}

To show the robustness of our proposed AEAIL, we further conduct experiments on noisy expert data. We add a Gaussian noise distribution $(0,0.3)$ to the expert data for Walker2d, Hopper, Swimmer and HalfCheetah. Since Ant and Humanoid are much more sensitive to noise, we add $(0,0.03)$ Gaussian noise to these two tasks. 

Table \ref{final_comparison_noisy} (training curves see in Figure \ref{NoisyCurve}) shows the final learned policy performances on benchmarks. 
These results show that our method outperforms other ablations on all tasks except for Swimmer, on which F-KLD wins. 
Our AEAIL offers an excellent capability in learning from noisy expert data on these tasks. The overall scaled rewards for our AEAIL is $0.813$, whereas the best ablation is $0.539$ for GOT on the noisy expert setting. There is an about $50.7\%$ relative improvement. Other discriminator based ablations, are very sensitive to the noisy expert. 


\begin{table*}
  \caption{Learned policy performance for different imitation learning algorithms on noisy expert data.}
  \label{final_comparison_noisy}
  \centering
  \begin{sc}
  \begin{tabular}{lcccccr}
    \toprule
Task &  Walker2d & Hopper & Swimmer & HalfCheetah & Ant & Humanoid   \\
\midrule

JSD &  $3358 \pm 623$   & $1361 \pm 725$ & $93 \pm 5$ & $3148 \pm 1096$ & $3353 \pm 1370 $ & $ 1110\pm 296$ 	\\

F-KLD  &  $ 3686 \pm 772$   & $ 2483\pm 899$ & $ {\pmb{97\pm 6}}$ & $ 3451\pm 588$ & $-1987 \pm 1633 $ & $ 544\pm 241$ 	\\

GOT  & $4043\pm 559$   & $2854 \pm 48$  & $33 \pm 8$ & $4933 \pm 343$ & $729 \pm 1274$ & $2321\pm 1038$	\\

ASW  &  $529 \pm 138$  & $357 \pm 171$  & $ 36\pm 3$  & $2019 \pm 900$  & $-1586 \pm 922 $  & $ 374\pm 66$	\\
\hline

\bf{Ours}&  ${\pmb{4149 \pm 423}}$    & ${\pmb{2935 \pm 110}}$  & $85 \pm 17$  & ${\pmb{7017 \pm 844}}$  & ${\pmb{5204 \pm 421}} $ & ${\pmb{3721 \pm 1831}}$ 	\\

\bottomrule
  \end{tabular}
  \end{sc}
\end{table*}

\textbf{Question 4.} \emph{How are the latent representations distributed with different reward function formulations?}

\begin{figure*}[t!]
	\centering
	\includegraphics[width=0.92\textwidth]{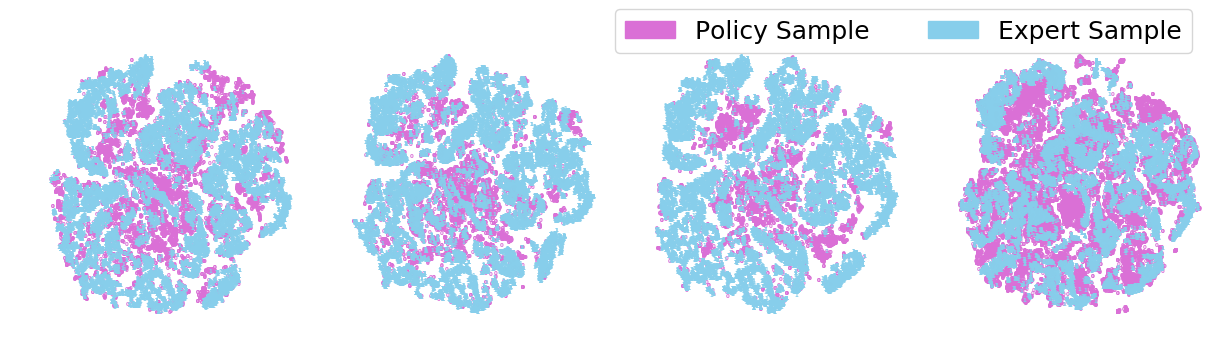}
Discriminator+JSD \qquad \qquad  Discriminator+ASW \qquad \qquad  Discriminator+F-KLD \qquad \qquad \qquad    Our AEAIL
	\caption{t-SNE visualization for the latent space of discriminator or our auto-encoder on noisy expert data on Walker2d. t-SNE's hyper-parameter perplexity is $30.0$.}
	\label{tSNE}
    \vskip -0.1in
\end{figure*}

AIL can be seen as inducing the expert policy through confusing the discriminator. Ideally, the final policy generated samples and expert samples would demonstrate identical distribution. We visualize the latent representations of the discriminator (in AIL methods) and the auto-encoder (in our method) on both the expert and the policy-generated samples with t-SNE. 

Ideally, the expert and final-policy-generated samples will be indistinguishable (greatly overlapping) in this latent space. Specifically, we visualize the output of the discriminator's first hidden (middle) layer for the AIL methods. For our AEAIL, we visualize the output of the auto-encoder's first hidden (middle) layer.

However, since the discriminator focus on the minor differences between the expert and generated samples, the embeddings won't fully overlapped. Figure \ref{tSNE} shows that our AEAIL greatly overlaps the two sets of embeddings compared to other discriminator based formulations with noisy expert data. It reflects that our auto-encoder can't distinguish the expert and generated samples any more in the latent space. It means that our auto-encoder can learn the full-scale difference between expert and generated samples. 

\textbf{Question 5.} \emph{What is the major contributing factor of our AEAIL? Could it be the specific W-distance metric?}

To analyze the major contributing factor of our AEAIL, we conduct an ablation study that replaces the distance to other distribution divergences. Comparable performances indicate that the major contributing factor is the encoding-decoding process rather than the specific distribution divergence. To justify this hypothesis, we formulate another form of surrogate reward function based on Jensen-Shannon divergence \citep{GAN2014}. Details see in Appendix \ref{Appendix:ablation-distance}. 

Jensen-Shannon divergence \citep{GAN2014} is defined as:
\begin{align}\label{JS-distance}
\text{JS}(\text{P}_r, \text{P}_g) = &{E}_{x \sim \text{P}_r}[\log(\frac{\text{P}_r(x)}{\frac{1}{2}(\text{P}_r(x)+\text{P}_g(x))})] +\\
&{E}_{x\sim \text{P}_g}[\log(\frac{\text{P}_g(x)}{\frac{1}{2}(\text{P}_r(x)+\text{P}_g(x))})]
\end{align}
where $\text{P}_r$ and $\text{P}_g$ represent the expert data distribution and the agent-generated sample distribution accordingly. 

Our JS-based variant induces the expert policy by minimizing the distribution distance between expert and generated samples. Specifically, we define that:
\begin{align}
    \text{P}_r(x)/(\text{P}_r(x)+\text{P}_g(x)) = \exp(-\text{AE}(x))
\end{align}
where $\text{AE}(x)$ is the same with Equation \ref{AE} in our original AEAIL.
Minimizing Jensen-Shannon divergence assigns low mean square errors to the expert samples and high mean square errors to other regions. 

\begin{table}
\caption{Relative improvements for different variants of our AEAIL compared to the best baseline JSD and GOT on clean and noisy data, respectively.}
\label{table:ablation}
\begin{center}
\begin{tabular}{llll}
\toprule
Improvements &  Ours  & Ours-JS  & Ours-VAE   \\
\midrule
Clean Data & $11.0\%$ & $10.5\%$  & $7.6\%$   \\

Noisy Data & $50.7\%$ & $44.9\%$  & $42.1\%$  \\

\bottomrule
\end{tabular}
\end{center}
\end{table}

Table \ref{table:ablation} shows that our JS-based variant achieves $10.5\%$ and $44.9\%$ relative improvement compared to the best baseline JSD method and GOT method on clean and noisy expert data, respectively. Similar to the original AEAIL, our JS-based variant also greatly improves the imitation performance on these benchmarks. The relative improvements are comparable between the two distance metrics. It indicates that the major contributing factor of our AEAIL is the encoding-decoding process. This also shows that AEAIL works not limited to a specific distance metric.

\textbf{Question 6.} \emph{Is AEAIL limited to the specific type of auto-encoders? How about utilizing variational auto-encoders?}

To prove that our AEAIL is not sensitive to a specific type of auto-encoders, we conduct experiments by replacing the vanilla auto-encoder with the variational auto-encoder. If variational auto-encoder leads to similar performance on the benchmarks, it illustrates that AEAIL works under a wide range of different auto-encoders. 
We run the VAE-based variant on MuJoCo tasks under the same setting as the AE-based one. Details see in Appendix \ref{Appendix:ablation-vae}.

\begin{figure}
\centering
\includegraphics[width=0.38\textwidth]{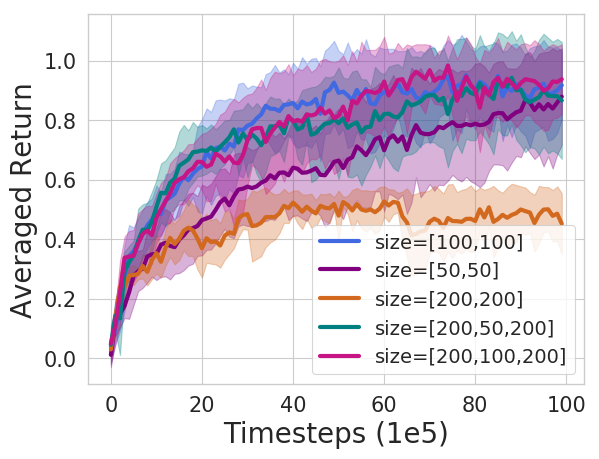}\\
	\caption{Training curves with averaged scaled rewards overall for varying hidden sizes of the AE.} 
	\label{ablationAE}
    \vskip -0.2in
\end{figure}

Table \ref{table:ablation} shows our VAE-based variant gets $7.6\%$ and $42.1\%$ relative improvement compared to the best baseline JSD and GOT on clean and noisy expert data, respectively. This means that our VAE-based variant improves the imitation performance considerably compared to other baselines. It justifies that our AEAIL is flexible with different auto-encoders.

\textbf{Question 7.} \emph{How our AEAIL performs with varying hidden size of the AE?}

To justify that our AEAIL is not sensitive to different sizes of the auto-encoder, we conduct ablations with different hidden sizes of the auto-encoder. If different sizes of the AE lead to similar imitation performance, it proofs that our AEAIL is not sensitive to the hidden size of the auto-encoder.  

Figure \ref{ablationAE} shows the averaged overall scaled rewards on six tasks of AEAIL with different hidden sizes of the auto-encoder. They achieve comparable results for most of the choices. However, when the size of AE is too large, the imitation performance of our AEAIL would decline. It depicts that the hidden size of auto-encoder cannot be too large. Otherwise, the AEAIL is not sensitive to the size of the auto-encoder. 

\section{Conclusions}
This paper presents a straightforward and robust adversarial imitation learning method based on auto-encoding (AEAIL). We utilize the reconstruction error of an auto-encoder as the surrogate pseudo-reward function for reinforcement learning. The advantage is that the auto-encoder based reward function focused on the full-scale differences between the expert and generated samples, which provides a denser reward signal to the agent. As a result, it enables the agents to learn better. Experimental results show that our methods achieve strong competitive performances on both clean and noisy expert data. In the future, we want to further investigate our approach in more realistic scenarios, such as autonomous driving and robotics.



\begin{acks}
This work is supported by the Ministry of Science and Technology of the People's Republic of China, the 2030 Innovation Megaprojects "Program on New Generation Artificial Intelligence" (Grant No. 2021AAA0150000). This work is also supported by the National Key R$\&$D Program of China (2022ZD0161700).
\end{acks}



\bibliographystyle{ACM-Reference-Format} 
\bibliography{aamas}

\onecolumn

\section{Appendix}

\subsection{Proof on that our reward function is K-Lipschitz}\label{Appendix:proof-theorem}

\begin{theorem}\label{theorem1} (the reward function is $2m(K+1)$-Lipschitz if the auto-encoder output is K-Lipschitz)
Provided that $f(x)$ is K-Lipschitz, then the reward function $1/(1+(x-f(x))^2)$ is 2m(K+1)-Lipschitz. 
\end{theorem}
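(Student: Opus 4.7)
The plan is to decompose the reward function as a composition $r = \phi \circ g$, where $g(x) := x - f(x)$ and $\phi(v) := 1/(1+\|v\|^2)$, and then bound the Lipschitz constant of each factor separately before multiplying. First I would verify that $g$ is $(K+1)$-Lipschitz: for any $x,y$, the triangle inequality together with the hypothesis on $f$ yields
\begin{align*}
\|g(x)-g(y)\| \le \|x-y\| + \|f(x)-f(y)\| \le (1+K)\,\|x-y\|.
\end{align*}

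Next I would bound the Lipschitz constant of $\phi$ by computing its gradient directly. A short calculation gives $\nabla \phi(v) = -2v/(1+\|v\|^2)^2$, so
\begin{align*}
\|\nabla \phi(v)\| \;=\; 2\cdot\frac{\|v\|}{(1+\|v\|^2)^2}.
\end{align*}
The scalar function $\psi(v) := \|v\|/(1+\|v\|^2)^2$ is continuous, vanishes at the origin, and tends to $0$ as $\|v\|\to\infty$, so it attains a finite global maximum; I would denote this maximum by $m$, and this is exactly the bounded constant appearing in the theorem statement. (One can even compute $m=9/(16\sqrt 3)$ explicitly by solving $2-6\|v\|^2=0$, but boundedness is all that is needed.) The mean-value inequality then says $\phi$ is $2m$-Lipschitz on the whole ambient space.

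Finally I would conclude by composition: for any $x,y$,
\begin{align*}
|r(x)-r(y)| \;=\; |\phi(g(x))-\phi(g(y))| \;\le\; 2m\,\|g(x)-g(y)\| \;\le\; 2m(K+1)\,\|x-y\|,
\end{align*}
which is the stated Lipschitz bound. The ``main obstacle'' here is really just a bookkeeping issue rather than a genuine mathematical difficulty: one has to handle the constant $m$ honestly (as the supremum of a smooth, vanishing-at-infinity function, hence finite) and make sure the argument works when $x$ and $f(x)$ are vectors rather than scalars, so that the scalar derivative computation is replaced by the gradient computation above. Both points are addressed by the remarks in the previous paragraph, so no delicate estimates are needed beyond a single bounded-derivative argument.
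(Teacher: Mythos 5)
Your proof is correct, but it takes a genuinely different route from the paper's. The paper writes the reward as $1/(1+h(x))$ with $h=g^2$, $g(x)=x-f(x)$, and proceeds in three elementary steps: $g$ is $(K+1)$-Lipschitz; $g^2$ is $2m(K+1)$-Lipschitz because $|g^2(x_1)-g^2(x_2)|=|g(x_1)-g(x_2)|\,|g(x_1)+g(x_2)|$ and $|g|\le m$ (justified by assuming the input features are bounded and the auto-encoder weights are clipped); and $t\mapsto 1/(1+t)$ is $1$-Lipschitz on $[0,\infty)$. So in the paper $m$ is a \emph{data-dependent} bound on $|x-f(x)|$, and the argument implicitly needs a boundedness hypothesis that does not appear in the theorem statement. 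You instead absorb the squaring and the reciprocal into a single outer map $\phi(v)=1/(1+\|v\|^2)$ and bound its gradient globally, which makes $m=\sup_v \|v\|/(1+\|v\|^2)^2=9/(16\sqrt{3})$ a \emph{universal, explicit} constant. Your version therefore needs no boundedness assumption on $x$ or $f(x)$, yields a typically much smaller constant, and extends cleanly to vector-valued inputs, at the cost of a small calculus computation in place of the paper's purely algebraic manipulations. Both arguments land on the same form $2m(K+1)$, but the two $m$'s denote different quantities, so if you adopt your route you should restate what $m$ means in the theorem.
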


\begin{proof}
~\\

\textbf{Step 1 }
If $f(x)$ is K-Lipschitz, then according to the definition of K-Lipschitz. There exists a constant $K$, for all real $x_1$ and $x_2$, such that:
\begin{align*}
    |f(x_1) - f(x_2)| \leq K |x_1 - x_2|
\end{align*}
where $f(x)$ is a real-valued function.

Then, we can obtain:
\begin{align*}
    |(x_1 - f(x_1) - (x_2 - f(x_2))| = |(x_1-x_2)-(f(x_1)-f(x_2))| \leq (K+1) |x_1 - x_2| 
\end{align*}

So there exist $K_1 = K+1$, for all real $x_1$ and $x_2$, such that
\begin{align*}
    |(x_1 - f(x_1) - (x_2 - f(x_2))| \leq K_1 |x_1 - x_2| 
\end{align*}

We can obtain that $(x - f(x))$ is $K_1$-Lipschitz. 

~\\

\textbf{Step 2 }
Let $g(x) = x - f(x)$, we can obtain that $g(x)$ is $K_1$-Lipschitz, then there exists a constant $K_1$, for all real $x_1$ and $x_2$, such that:
\begin{align}\label{def_lipschitz}
    |g(x_1) - g(x_2)| \leq K_1 |x_1-x_2|
\end{align}

In our AEAIL, $x$ is the input feature which is bounded while $f(x)$ is also bounded since the weights of the auto-encoder is bounded with weight clipping. So $g(x)$ is bounded. 

Therefore, we can obtain:
\begin{align*}
    |g^2(x_1)-g^2(x_2)| = |g(x_1) - g(x_2)| \cdot |g(x_1) + g(x_2)| \leq K_1|x_1-x_2| \cdot |g(x_1) + g(x_2)|
\end{align*}

We assume that $g(x)$ is bounded by a constant $m$, then:
\begin{align*}
    |g^2(x_1)-g^2(x_2)| \leq 2mK_1|x_1-x_2| 
\end{align*}

So there exist $K_2=2mK_1$, such that for all real $x_1$ and $x_2$:
\begin{align*}
    |g^2(x_1)-g^2(x_2)| \leq K_2|x_1-x_2| = 2m(K+1)|x_1-x_2|
\end{align*}

We can obtain that the reconstruction error $g^2(x) = (x-f(x))^2$ is also $2m(K+1)$-Lipschitz. 

~\\

\textbf{Step 3 }
We prove that if $h(x)$ is K-Lipschitz, then $1/(1+h(x))$ is also K-Lipschitz.

Since $h(x)$ is K-Lipschitz, we can obtain that there exists $K$, for all real $x_1$, $x_2$, that

$$|h(x_1) - h(x_2)| \leq K |x_1-x_2|$$

We can obtain:

$$|1/(1+h(x_1)) - 1/(1+h(x_2))| = |(h(x_2)-h(x_1))/(h(x_1+h(x_2)+1+h(x_1)\cdot h(x_2)))|$$

Since we view $h(x)$ as the reconstruction error of an auto-encoder, $h(x) \geq 0$
So We can obtain:

\begin{align}
|1/(1+h(x_1)) - 1/(1+h(x_2))| = &|(h(x_2)-h(x_1))/(h(x_1+h(x_2)+1+h(x_1)\cdot h(x_2)))| \\
&\leq |h(x_2)-h(x_1)| \leq K |x_1-x_2|
\end{align}

So $1/(1+h(x))$ is also K-Lipschitz. 

~\\

\textbf{Conclusion }
So far, we can obtain that if $f(x)$ is K-Lipschitz, then our AEAIL's reward formulation $1/(1+(x-f(x))^2)$ is $2m(K+1)$-Lipschitz.

\end{proof}

\subsection{Proof on that AEAIL is minimizing the W-distance}\label{Appendix:proof-corollary}

\begin{corollary}\label{corollary} (Our AEAIL is minimizing Wasserstein distance) the divergence for AEAIL:
\begin{align}\label{objective}
d(\pi_E, \pi_\theta) = \sup_{|r_w|_L \leq K} {E}_{\pi_E}[r_w(s,a)] - {E}_{\pi_\theta}[r_w(s,a)], 
\end{align}
is Wasserstein distance. 

\end{corollary}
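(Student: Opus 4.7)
The plan is to reduce the corollary to an application of Theorem~\ref{theorem} combined with the Kantorovich--Rubinstein duality that already underlies the definition of the Wasserstein divergence in Equation~\ref{w-distance}. In particular, I will show that the AEAIL objective is (a positive multiple of) the standard $1$-Lipschitz Wasserstein-1 supremum, so it is a Wasserstein distance up to a fixed scaling.

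First, I would establish that the parametric reward class used by AEAIL genuinely sits inside a Lipschitz ball. Because weight clipping is applied to the auto-encoder, the map $\text{Dec}\circ\text{Enc}$ is $K_0$-Lipschitz for some finite $K_0$, and because the state--action domain is bounded the reconstruction residual is bounded by some constant $m$. Theorem~\ref{theorem} then guarantees that $r_w(s,a) = 1/(1+\text{AE}_w(s,a))$ is $K$-Lipschitz with $K := 2m(K_0+1)$. Hence the constraint $|r_w|_L \leq K$ in the AEAIL supremum is satisfied by construction, and the supremum is taken over a subset of the $K$-Lipschitz functions on the state--action space.

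Next, I would invoke the Kantorovich--Rubinstein theorem: for any two probability measures $p,q$ on a bounded Polish space, $W_1(p,q) = \sup_{|\phi|_L \leq 1} E_{x\sim p}[\phi(x)] - E_{y\sim q}[\phi(y)]$, which is exactly the form appearing in Equation~\ref{w-distance}. A trivial rescaling $\phi \mapsto \phi/K$ yields $\sup_{|\phi|_L \leq K} E_p[\phi] - E_q[\phi] = K\cdot W_1(p,q)$. Applying this identity with $p = \pi_E$, $q = \pi_\theta$, I would conclude that the AEAIL divergence in the stated form equals $K\cdot W_1(\pi_E,\pi_\theta)$, which is a Wasserstein distance up to the positive constant $K$.

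The step I expect to be the main obstacle is that, strictly speaking, the AEAIL supremum ranges not over \emph{every} $K$-Lipschitz test function but only over those realizable as $1/(1+\text{AE}_w)$ for a weight-clipped auto-encoder, so the parametric supremum is only a lower bound on $K\cdot W_1(\pi_E,\pi_\theta)$. To upgrade this inequality to the equality asserted by the corollary, I would make a universal-approximation assumption on the auto-encoder network---the same kind of assumption implicitly used by WGAN to justify its Lipschitz critic---so that the parametric class is dense in the $K$-Lipschitz ball and the two suprema coincide. Under this assumption, the AEAIL divergence is exactly $K\cdot W_1(\pi_E,\pi_\theta)$, and the corollary follows.
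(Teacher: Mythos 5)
Your proposal takes essentially the same route as the paper: both reduce the claim to the WGAN integral-probability-metric characterization (Kantorovich--Rubinstein duality over the $K$-Lipschitz ball) and use Theorem~\ref{theorem} plus weight clipping to place the auto-encoder-based reward inside that ball. In fact you are more careful than the paper's own proof, which silently absorbs the scaling factor $K$ and dismisses the gap between the parametric supremum over realizable rewards $1/(1+\text{AE}_w)$ and the supremum over the full Lipschitz ball with the phrase ``this will yield the same behaviour,'' whereas you make that gap explicit and close it with a stated density assumption.
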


\begin{proof}
\citep{WGAN} shows that given $\mathcal{F}$ is a set of functions from $\mathcal{X}$ to ${R}$, we can define
\begin{align*}
    d_{\mathcal{F}}(P_r, P_{\theta}) = \sup_{f \in \mathcal{F}}{E}_{x\sim P_r}[f(x)] - {E}_{x \sim P_\theta}[f(x)]
\end{align*}
as an integral probability metric with respect to the function class $\mathcal{F}$. When $\mathcal{F}$ is K-Lipschitz functions, we can obtain $d_{\mathcal{F}}(P_r,P_\theta)$ is the Wasserstein distance. 

For our AEAIL, our reward function is:
\begin{align*}
r(s,a) = 1/(1+\Vert x - \text{Dec}\circ \text{Enc}(x)\Vert^2)   
\end{align*}

The reward function is K-Lipschitz when we clip the weight of the auto-encoder. This will yield the same behaviour. Therefore, our AEAIL actually optimizes the W-distance. 

\end{proof}

\subsection{Implementation}\label{Appendix:implementation}

Data, code, and video see in supplementary materials. This section will introduce the implementation details, including tasks, basic settings for all algorithms, baseline implementations, network architecture for our approach. 

\subsubsection{DMControl Tasks}

DMControl is a DeepMind's software stack for physics-based simulation and Reinforcement Learning environments, using MuJoCo physics. It provides both state and image based rendering. In our experiments, we use DMControl in phase 1 experiment. 

\subsubsection{MuJoCo Tasks}

\begin{figure}[htb]
	\centering
	\begin{minipage}{0.16\linewidth}\centering
		\includegraphics[width=0.95\textwidth]{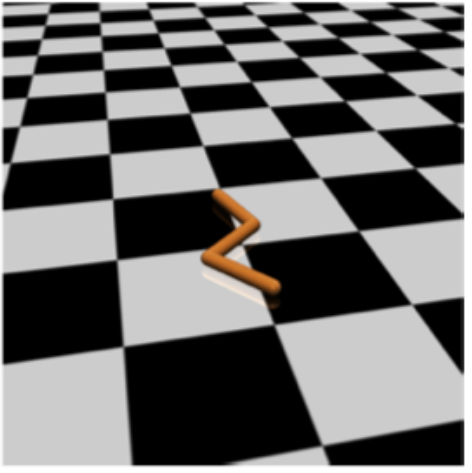}\\
		(a)\label{a}
	\end{minipage} 
	\begin{minipage}{0.16\linewidth}\centering
		\includegraphics[width=0.95\textwidth]{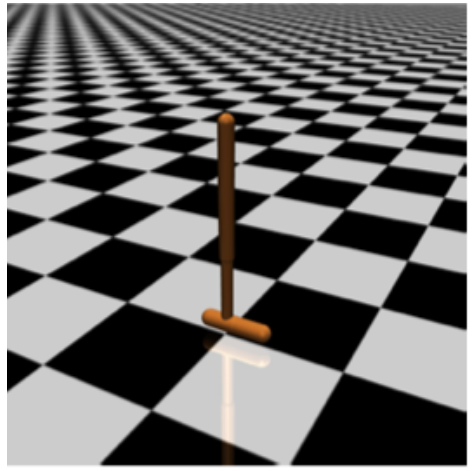}\\
		(b)\label{b}
	\end{minipage} 
	\begin{minipage}{0.16\linewidth}\centering
		\includegraphics[width=0.95\textwidth]{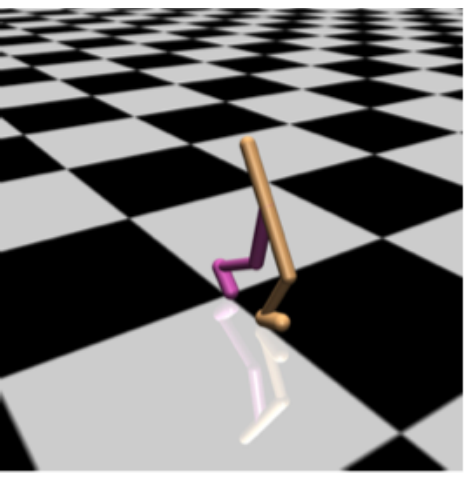}\\
		(c)\label{c}
	\end{minipage}
	\begin{minipage}{0.16\linewidth}\centering
		\includegraphics[width=0.95\textwidth]{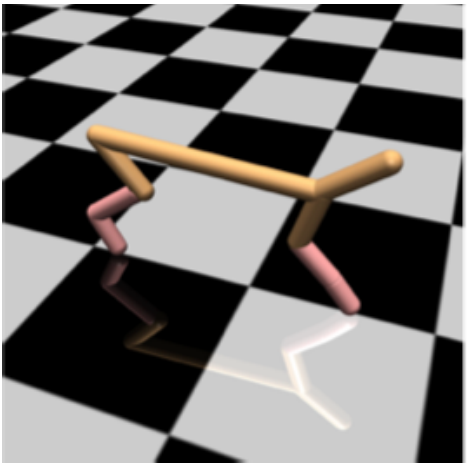}\\
		(d)\label{d}
	\end{minipage} 
	\begin{minipage}{0.16\linewidth}\centering
		\includegraphics[width=0.95\textwidth]{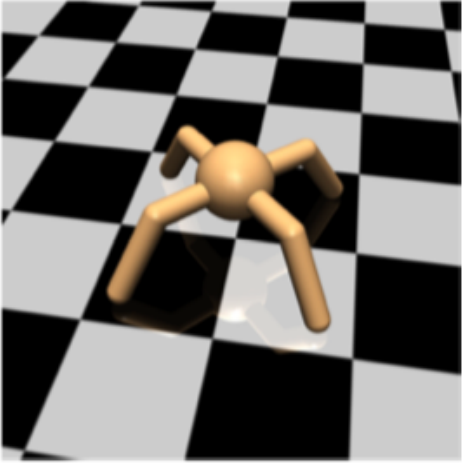}\\
		(e)\label{e}
	\end{minipage} 
	\begin{minipage}{0.16\linewidth}\centering
		\includegraphics[width=0.95\textwidth]{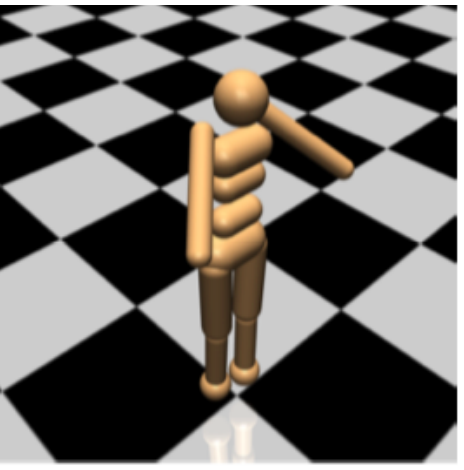}\\
		(f)\label{f}
	\end{minipage}
	\caption{Illustrations for locomotion tasks we used in our experiments: (a) Swimmer; (b) Hopper; (c) Walker; (d) HalfCheetah; (e) Ant; (f) Humanoid.}
	\label{locomotion}
\end{figure}

The goal for all these MuJoCo tasks is to move forward as quickly as possible. These tasks are more challenging than basic ones due to high degrees of freedom. In addition, a great amount of exploration is needed to learn to move forward without getting stuck at local optima. Since we penalize for excessive controls and falling over, during the initial stage of learning, when the robot cannot move forward for a sufficient distance without falling, apparent local optima exist, including staying at the origin or moving forward slowly. Figure \ref{locomotion} depicts locomotion tasks' environments. We use MuJoCo tasks in experiment phase 2. 

\subsubsection{Resources and License}
We use 8 GTX 3080 GPU and 40 CPU to run the experiments. Our algorithm needs $\sim 10$ hours for training. We use an open source MuJoCo license to run the locomotion tasks. The license can be found here: https://mujoco.org.

\subsubsection{Basic settings for Phase 1 Experiments}
Every dataset for DMControl tasks includes at maximum 10 trajectories. We run experiments with 1, 4, 7, 10 trajectories. The code and data is open sourced in supplementary materials. 

\subsubsection{Basic settings for Phase 2 Experiments}
We normalize the state features for each task and use the normalized features as the input for the auto-encoder. We also use the raw data input to compute the final output of the mean square error. 

We train these algorithms directly on Walker2d, Hopper, and Swimmer, while on HalfCheetah, Ant, and Humanoid, we use BC to pre-train the policy with 10k iterations. This operation is based on OpenAI baselines.
We test all the algorithms on these six locomotion tasks with five random seeds and using ten rollouts to estimate the ground truth rewards to plot the training curves. 

The maximum length for sampled trajectories is $1024$. The discounted factor is $0.995$. Each iteration, we update three times of policies and one time of the reward function (discriminator or auto-encoder). While updating TRPO, we update the critic five times with a learning rate of 2e-4 every iteration. 

We clip the auto-encoder's parameters into $[-0.99,0.99]$ to make the AE K-Lipschitz continuous.

\textbf{JSD (as GAIL)} \citep{GAIL} borrows the GAN \citep{GAN2014} architecture to realize the IRL process, and it is applicable in high dimensional dynamic environments. We implement GAIL with open-source code, OpenAI baselines \citep{baselines}, with best tunned hyperparameters.

\textbf{F-KLD (as FAIRL)} \citep{FAIRL} utilizes the forward KL divergence as the objective, which shows competitive performances on MuJoCo tasks. FAIRL is implemented based on GAIL with corresponding objective function as in \citep{FAIRL}.

\textbf{GOT (as PWIL)} \citep{PWIL} is an imitation learning method based on an offline similarity reward function. It is pretty robust and easy to fine-tune the hyper-parameters of its reward function (only two hyper-parameters). PWIL is implemented based on open source code \citep{PWIL} with TRPO as the reinforcement learning method for a fair comparison.

\textbf{ASW (as DAC)} \citep{DAC} is discriminator-actor-critic algorithm which introduces absorbing states' rewards to improve the imitation performance.

\subsubsection{Architecture of our methods}

\textbf{Policy Net:} two hidden layers with 64 units, with tanh nonlinearities in between, which is the same with baselines. The learning rate we used for updating the policy is $0.01$. 

\textbf{Auto-Encoder Net:} (both vanilla auto-encoder and variational auto-encoder) the number of layers is the same with the architecture of discriminator in GAIL, i.e., four layers. The two hidden layers are with 100 units, with tanh nonlinearities in between, the final output layer is an identity layer. We normalize the state feature after the input layer, and we use the raw state action input features to compute the mean square error. The learning rate for the auto-encoder is 3e-4.

\subsection{Empirical study: different distribution divergences}\label{Appendix:ablation-distance}

We conduct an ablation study with Jensen-Shannon divergence. We run the experiments with a new form of reward function based on Jensen–Shannon divergence \citep{GAN2014} for comparison. 

Jensen-Shannon divergence is defined as: 
\begin{align}
    \text{JS}(\text{P}_r, \text{P}_g) = {E}_{x \sim \text{P}_r}[\log(\frac{\text{P}_r(x)}{\frac{1}{2}(\text{P}_r(x)+\text{P}_g(x))})] + {E}_{x\sim \text{P}_g}[\log(\frac{\text{P}_g(x)}{\frac{1}{2}(\text{P}_r(x)+\text{P}_g(x))})],
\end{align}
where $\text{P}_r$ and $\text{P}_g$ represents the expert data distribution and the generated data distribution. 
The adversarial training procedure of AEAIL will match the state-action distribution of expert and generated samples. 

In this form of reward function, we consider the exponential of negative reconstruction error as: $\text{P}_r(x)/(\text{P}_r(x)+\text{P}_g(x))$. The derived Jensen-Shannon divergence for updating the auto-encoder based reward function is:
\begin{align}\label{objective:JS1}
    \mathcal{L} =& - {E}_{(s,a) \sim D_E}[\log(\exp(-\text{AE}_w(s,a)))] - {E}_{(s,a) \sim \pi}[\log(1 - \exp(-\text{AE}_w(s,a)))]\\
    =& {E}_{(s,a) \sim D_E}[\text{AE}_w(s,a)] - {E}_{(s,a) \sim \pi}[\log(1 - \exp(-\text{AE}_w(s,a)))]
\end{align}
Here, $w$ represents the parameters of the auto-encoder network. 
Its formulation of surrogate reward function is:
\begin{align}
    r_w(s, a) = - \log(1 - \exp(-\text{AE}_w(s,a))).
\end{align}

Minimizing the Jensen-Shannon divergence between state-action distribution of the expert and generated trajectories assigns low mean square errors to the expert samples and high mean square errors to other regions. And intuitively, we give high rewards to the regions with low reconstruction errors and vice versa.

\begin{table*}[htb]
\caption{Learned final policy performance for our AEAIL with W-distance, AE based variant (Ours), JS-distance (Ours-JS), and VAE based variant (Ours-VAE). Top three rows: non-noisy expert; Bottom three rows: noisy expert.}
\label{table:distances}
\begin{center}
\begin{sc}
\begin{tabular}{lcccccr}
\toprule
Task &  Walker2d & Hopper & Swimmer & HalfCheetah & Ant & Humanoid   \\
\midrule

\bf{Ours}&  $4810 \pm 111$    & $3125 \pm 159$  & $111 \pm 22$  & $7458 \pm 191$  & $5394 \pm 451$ & $4717 \pm 346$ 	\\

\textbf{JS}&  $4589 \pm 498^{\color{ForestGreen}{\pmb{\downarrow}}}$    & $3245 \pm 65^{\color{red}{\pmb{\uparrow}}}$  & $123 \pm 3^{\color{red}{\pmb{\uparrow}}}$  & $7901 \pm 226^{\color{red}{\pmb{\uparrow}}}$  & $5873 \pm 512^{\color{red}{\pmb{\uparrow}}}$ & $ 3564\pm 432^{\color{ForestGreen}{\pmb{\downarrow}}}$	\\

\textbf{VAE}&  $4813\pm 345^{\color{ForestGreen}{\pmb{\downarrow}}}$    & $3198\pm 142^{\color{red}{\pmb{\uparrow}}}$  & $ 106\pm 6^{\color{ForestGreen}{\pmb{\downarrow}}} $  & $8042\pm 208^{\color{red}{\pmb{\uparrow}}}$  & $4901 \pm 687^{\color{ForestGreen}{\pmb{\downarrow}}}$ & $ 3981 \pm 494^{\color{ForestGreen}{\pmb{\downarrow}}}$	\\

\hline
\bf{Ours}&  $4149 \pm 423$    & $2935 \pm 110$  & $85 \pm 17$  & $7017 \pm 844$  & $5204 \pm 421$ & $3721 \pm 1831$ 	\\

\textbf{JS}&  $4298 \pm 338^{\color{red}{\pmb{\uparrow}}}$    & $2698 \pm 128^{\color{ForestGreen}{\pmb{\downarrow}}}$  & $81 \pm 4^{\color{ForestGreen}{\pmb{\downarrow}}}$  & $6768 \pm 687^{\color{ForestGreen}{\pmb{\downarrow}}}$  & $5381 \pm 512^{\color{red}{\pmb{\uparrow}}}$ & $ 3121 \pm 1254^{\color{ForestGreen}{\pmb{\downarrow}}}$\\

\textbf{VAE} &  $4354\pm 397^{\color{red}{\pmb{\uparrow}}}$    & $3018 \pm 105^{\color{red}{\pmb{\uparrow}}}$  & $ 78\pm 7^{\color{ForestGreen}{\pmb{\downarrow}}}$  & $6782\pm 781^{\color{ForestGreen}{\pmb{\downarrow}}}$  & $4292 \pm 418^{\color{ForestGreen}{\pmb{\downarrow}}}$ & $ 3082 \pm 1291^{\color{ForestGreen}{\pmb{\downarrow}}}$ 	\\
\bottomrule
\end{tabular}
\end{sc}
\end{center}
\end{table*}

The results in Table \ref{table:distances} show that W-distance based method is comparable with JS-distance based method on almost all tasks. Our JS based variant achieves $10.5\%$ and $44.9\%$ relative improvement compared to the best baseline JSD and GOT on clean and noisy expert data, respectively. It is similar to the W-distance based variant. This indicates that the encoding-decoding process is the major contributing factor in our AEAIL which improves the imitation performance rather than the specific distribution divergence.

\subsection{Empirical study: utilizing other auto-encoders}\label{Appendix:ablation-vae}

We conduct experiments with variational auto-encoder based reward function as an ablation. The variational auto-encoder \citep{VAE} is a directed model that uses learned approximate inference. The critical insight behind training variational auto-encoder is to maximize the variational lower bound $\mathcal{L}(q)$ of the log-likelihood for the training examples \citep{DLBook}:
\begin{align}
\mathcal{L}(q) = & {E}_{z \sim q(z|x)}\log p_{\text{model}}(z,x) + \mathcal{H}(q(z|x)) \label{EntropyVariational} \\
                =& {E}_{z \sim q(z|x)}\log p_{\text{model}}(x|z) - D_{\text{KL}}(q(z|x)\Vert p_{\text{model}}(z)) \label{VariationalEq}\\
                \leq & \log p_{\text{model}}(x)
\end{align}
where $z$ is the latent representation for data points $x$. And $p_{\text{model}}$ is chosen to be a fixed Gaussian distribution $\mathcal{N}(0, I)$. 
Training the variational auto-encoder is still a adversarial process. We notice that the first term in Eq. \ref{VariationalEq} is the reconstruction log-likelihood found in traditional auto-encoders. The objective function for training the variational auto-encoder is:
\begin{align}
\begin{split}
    \mathcal{L} = &{E}_{(s,a) \sim D_E}[r_w(s,a)+D_{\text{KL}}(p(z|(s,a)), p_{\text{model}}(z))] \\
            &\quad   - {E}_{(s,a) \sim \pi_\theta}[r_w(s, a)+ D_{\text{KL}}(p(z|(s,a)), p_{\text{model}}(z))]
\end{split}
\end{align}
where the choice for $r_w(s,a)$ is the same with our AE based one, which is Eq. \ref{RFunction} in Section \ref{Method}. When minimizing this objective, the variational auto-encoder based reward function can still assign high values near the expert demonstrations and low values to other regions. On the other hand, optimizing this objective is restricting the latent distribution for the expert to a fixed Gaussian distribution $p_{\text{model}}(z)$ and maximizing the KL distance between the generated samples and the same fixed Gaussian distribution. 

Table \ref{table:distances} shows the results for our AE based variant and VAE based variant on both clean and noisy expert data. Our VAE based variant gets $7.6\%$ and $42.1\%$ relative improvement compared to the best baseline JSD and GOT, respectively. Comprehensively, our AEAIL works well with the VAE formulation.

\end{document}